\title{Beyond Static Assumptions: the Predictive Justified Perspective Model for Epistemic Planning (with Supplementary)}
\author {
    Guang Hu\textsuperscript{\rm 1},
    Weijia Li\textsuperscript{\rm 1},
    Yangmengfei Xu\textsuperscript{\rm 1}
}
\newtheoremstyle{normalfont}
  {3pt} 
  {3pt} 
  {\normalfont} 
  {} 
  {\bfseries} 
  {.} 
  { } 
  {} 
\theoremstyle{normalfont}
\newcommand{\relationset}{\mathcal{R}}
\newcommand{\override}[1]{\langle #1 \rangle}
\newcommand{\seq}{\vec{s}\,}
\newcommand{\svec}[1]{\vec{#1}\,}
\newcommand{\statespace}{\mathcal{S}}
\newcommand{\seqspace}{\vec{\mathcal{S}}}
\newcommand{\statespacecomplete}{\mathcal{S}_{c}}
\newcommand{\f}{\mathit{f}}
\newcommand{\observation}{\mathit{O}}
\newcommand{\memorization}{\mathit{R}}
\newcommand{\none}{\perp}
\newcommand{\assign}{\!=\!}
\def\unknown{\frac{1}{2}}
\renewcommand{\ALG@name}{PDDL+ Example} 
\definecolor{formalshade}{rgb}{0.95,0.95,1}
\definecolor{darkblue}{rgb}{0.0,0.0,0.5}
\newtcolorbox{myquote}[1][]{%
  colback=formalshade, colframe=darkblue,
  left=4pt, right=4pt, top=4pt, bottom=4pt,
  boxrule=0.5pt,
  sharp corners,
  width=\columnwidth,
  enhanced,
  breakable,
  #1
}
\newcommand{\action}[1]{\textbf{#1}}
\newtheorem{mydefinition}{Definition}
\newtheorem{theorem}{Theorem}
\newtheorem{subdefinition}{Definition}[mydefinition]
\newcommand{\defmathend}{\tag*{\text{$\blacksquare$}}}
\newcommand{\defnormalend}{\hfill$\blacksquare$}
\newtheorem{example}{Example}
\lstdefinelanguage{PDDL}{
    keywords={define, domain, problem, requirements, types, objects, predicates, action, parameters, precondition, effect,goal,init,@,functions,bounds,ranges,rules},
    sensitive=false,
    morecomment=[l]{;},       
    morestring=[b]",          
    alsoletter={.},
}
\begin{document}

\maketitle

\begin{abstract}
\emph{Epistemic Planning} (EP) is a class of planning problems that involve higher-order reasoning about knowledge and beliefs.
While various approaches have been proposed for EP, they generally inherit the static-environment assumption from classical planning.
This limits their applicability in dynamic settings (e.g., a constantly moving object or a falling ball).
To relax this assumption, we extend one of the state-of-the-art models, the \emph{Justified Perspective}~(JP) model, and propose the \emph{Predictive Justified Perspective}~(PJP) model. 
Rather than following the JP model’s assumption that beliefs remain unchanged since the last observation, the PJP model uses past observations to predict changes in variables.
In this paper, we begin by revisiting the JP model and formally defining the PJP model, including the structure and properties of its prediction function.
Following this, we formalize EP problems and illustrate how the PJP model and its semantics can be integrated into this formalism through an example encoding. 
To assess its effectiveness, our approach is evaluated on the most challenging epistemic planning benchmark domain, \emph{Grapevine}, using the simplest complete search algorithm, Breadth-First Search~(BFS).
Experimental results show that the PJP model supports reasoning about (nested) beliefs over changing variables with representational flexibility—a capacity not present in existing epistemic planners—and thus broadens the applicability of EP to dynamic environments.

\end{abstract}

\section{Introduction}
\label{sec:intro}

\emph{Epistemic Planning}~(EP), as an advancement of classical planning, has been developed to enable sound planning based on agents’ knowledge and beliefs.
Currently, all EP approaches follow the assumption that the environment remains static unless explicitly altered by an agent’s actions, an assumption inherited from classical planning.
However, in practical applications, this assumption sometimes fails to hold.
For example, it is unrealistic to treat all pedestrians as static for autonomous driving vehicles operating in busy urban areas.
Accurately modeling pedestrian motion—often represented as a first-order polynomial—is therefore critical, particularly in scenarios involving jaywalking.
Moreover, visibility can be significantly impaired by occlusions such as trucks, increasing risks to both autonomous vehicles and human drivers. 
In such situations, experienced truck drivers may proactively signal surrounding vehicles whose view is obstructed, based on their nested belief that those drivers lack an accurate belief about the jaywalker’s motion model.
Addressing this gap necessitates an epistemic reasoning model capable of accounting for environmental changes, thereby aligning EP frameworks more closely with real-world applications. 
Therefore, to relax this assumption, we build upon an existing epistemic planning framework.

Currently, EP is primarily addressed through three main approaches.
The \emph{Dynamic Epistemic Logic}~(DEL)-based approach was first proposed by \citeauthor{Bolander2011Epistemicsystems}~\shortcite{Bolander2011Epistemicsystems}, and it maintains a Kripke structure~\cite{DBLP:books/mit/FHMV1995} using an event-based model which requires explicit action effects to specify modal logic changes.
The \emph{Knowledge Bases} strategy is another approach that maintains and updates agents' knowledge/belief databases by converting the epistemic planning problems into classical planning problems~\cite{Muise2015PlanningOM,muise2022,DBLP:journals/dm/CooperHMMR19}.
When the scale of the problem increases, both DEL and pre-compilation methods become computationally challenging.
To address this challenge, a \emph{state-based} approach, \emph{Planning with Perspectives}~(PWP)~\cite{Hu2022} leverages external functions and lazy evaluation, which enable offloading the epistemic formula reasoning from the planner, hence improving both efficiency and expressiveness.
Moreover, they define a semantics that reasons based on solely agents' observations, which proved can be done in polynomial time with regard to nesting depth.
However, the PWP approach only handles knowledge (not belief).
A recent continuation study introduced the \emph{Justified Perspectives}~(JP) model to handle the belief~\cite{Hu2023}.
The JP model generates the belief in a way drawing inspiration from two intuitions of human reasoning: humans believe what they see; and, for the parts they could not see, humans believe what they have seen in the past unless they saw evidence to suggest otherwise. 

Compared to other approaches, the state-based approach is more suitable as: 1) it is computationally efficient; 2) it is expressive; and 3) it does not require modeling epistemic logic updates explicitly (as effects of actions). 
Therefore, here, we propose an extension of the JP model to handle dynamic environments and relax the ``static'' assumption. 

In this paper, we first revisit the concept and definition of the JP model in Section~\ref{sec:preliminary}, then propose our new model built upon it. Implementation and experiments are presented to show its effectiveness with epistemic planning instances.

\section{Preliminaries of the JP Model}
\label{sec:preliminary}

The JP model~\cite{Hu2023} was proposed following the intuition~\cite{goldman1979justified}: unless they see evidence to the contrary, agents believe that what they have seen before.
Specifically, when agents infer unobservable variables, if there is no evidence suggesting that these variables have changed, they form justified beliefs by recalling information from their past observations.
``Justified perspectives'' is the term used to represent what agents justifiably believe the sequence of states to be.

A JP \textbf{Signature} is defined as a tuple:
$\Sigma = (Agt,V,\mathbb{D},\mathcal{R})$
where
$Agt$ represents a finite set of agent identifiers containing all agents.
The set $V$ is a finite set of variables such that $Agt\!\subseteq \!V$. 
For each variable $v \!\in \!V$, $D_{v}$ denotes a potentially infinite domain of constant symbols (containing a special value $\none$ to represent none value). 
The domains can be either discrete or continuous, and the overall set of values is given by $\mathbb{D}\!= \!\textstyle\bigcup_{v \in V} \{D_v\}$.
Additionally, $\mathcal{R}$ is a finite set of predicate symbols $r$. 

With the above signature, the state $s$ is defined as a set of assignments that match the variable $v\! \in\! V$ and any value $e\! \in \!D_v$ in the format of $v \!\assign\! e$.
The value of the variable $v$ in a state $s$ can be represented by $s(v)$.
The state space is denoted as $\statespace$ and the complete-state (complete assignments) space is denoted as $\statespacecomplete$.
A global state is a complete state, while a local state might be a partial state.
Besides, the state sequence is denoted as $\seq$ with an index from $0$ to $n$, where $n \!\in \!\mathbb{N}$, and the state with an index of $t$ from this sequence can be represented by $\seq[t]$, and its value for variable $v$ can be represented as $\seq[t](v)$.
Similarly, the sequence space is denoted as $\vec{\statespace}$ and the complete-sequence space is $\vec{\statespacecomplete}$.
Besides, the following override function is provided to update states.

\begin{mydefinition}[State Override Function]
Given a state $s$ and $s'$, the state override function \( \langle \rangle: \statespace \times \statespace \to \statespace \) to override $s'$ with $s$ is defined as follows: 
\[
s'\langle s \rangle = s \cup \{ v \!=\! s'(v) \mid v \in s' \text{ and } v \notin s \} \defmathend
\]
\end{mydefinition}
The above function overwrites state $s'$ with state $s$, preserving assignments from $s$ and maintaining those variables found only in $s'$ but not in $s$.
In addition, the state override function can also be applied to a sequence of states, where $s' \override{\seq} = [s' \override{\seq[0]},\dots,s' \override{\seq[n]}]$.


\begin{mydefinition}[Language]\label{def:language}
The language of Knowledge and Belief, $L_{K\!B}(\Sigma)$, of the JP model is defined by the grammar:
\[
\varphi ::= r(V_r) \mid \neg \varphi \mid \varphi \land \varphi \mid S_i v \mid S_i \varphi \mid K_i \varphi \mid B_i \varphi,
\]
where $r \in \relationset$, $V_r \subseteq V$ are the terms of $r$, and $r(V_r)$ forms a predicate ; $v\in V$, and $i \in Agt$. \defnormalend
\end{mydefinition}

$S_i v$ means agent $i$ sees variable $v$.
Similarly, $S_i \varphi$, $K_i \varphi$, and $B_i \varphi$ mean agent $i$ sees, knows, and believes formula $\varphi$, respectively.
The difference between knowledge and belief is that the knowledge of $\varphi$ is consistent with the actual world while the belief may not be the truth. 
For illustration, in this paper, we denote the set of all predicates as $\mathcal{P}$.
\begin{mydefinition}[JP Model]\label{def:JPM}
With the above signature and language, the JP \textbf{model} $M$ is defined as:
\[
M = (Agt, V, \mathbb{D}, \pi, O_1, \ldots, O_{|Agt|}),
\]
where:
$Agt$, $V$ and $\mathcal{D}$ are from the signature $\Sigma$;
$\pi$ is an interpretation function, $\pi : \mathcal{S} \times \mathcal{P} \rightarrow \{true,false\}$, that determines whether the predicate $r(V_r)$ is true in $s$.
Finally, $O_1, \ldots, O_{|Agt|}$ are observation functions defined below. \defnormalend
\end{mydefinition}
Hence, a special state $s_\none$ can be used to represent a state in which all variables are $\none$ ($s_\none = \{v \assign \none \mid v \in V\}$).
\begin{mydefinition}[Observation Function]
\label{def:observation}
An observation function for agent~$i$, $\observation_i: \statespace \rightarrow \statespace$, is a function that takes a state and returns a subset of that state, representing the part of the state visible to agent~$i$. 
The following properties must hold for an observation function $\observation_i$ for all $i \in Agt$ and $s \in S$:
\begin{equation*}
    \begin{aligned}
            1. &\ \observation_i(s) \subseteq s , & \text{(Contraction)}\\
            2. &\ \observation_i(s) = \observation_i(\observation_i(s)), &  \text{(Idempotence)}\\
            3. &\ \text{ If } s \subseteq s', \text{ then } \observation_i(s) \subseteq \observation_i(s'), &  \text{(Monotonicity)}\\
    \end{aligned}\defmathend
\end{equation*} 
\end{mydefinition}
The idea is to reason about the agents' epistemic formulae based on what they can observe from their own local states. 
This is accomplished by defining an observation function for each agent $i$.
Although the JP model reasons with sequences, its original authors did not specify the observation function to handle sequences.
In this paper, we denote $\observation_i(\seq) = [\observation_i(\seq[0]),\dots,\observation_i(\seq[n])]$ (\textit{i.e.} $\observation_i(\seq[t])\equiv \observation_i(\seq)[t]$).

Then, the \textbf{Retrieval function} is introduced to retrieve the most recent value of the variable $v$.
\begin{mydefinition}[Retrieval Function]
\label{def:jp:R}
Given a sequence of states  $\seq$, a timestamp $m$, and a variable $v$, the retrieval function, $\memorization: \seqspace \times \mathbb{Z} \times V \rightarrow \mathbb{D}$, is defined as:
    \[
    \memorization(\seq,m,v) = 
\begin{cases} 
\seq[\max(\text{LT})](v) & \text{if } \text{LT} \neq \{\} \\
\seq[\min(\text{RT})](v) & \text{else if } \text{RT} \neq \{\} \\
\none & \text{otherwise}
\end{cases}
    \]
    where LT $= \{j \mid v \!\in \!\seq[j] \ \land \ \seq[j](v) \!\neq \none \ \land \ j \!\leq m \}$ and \\RT $= \{j \mid v \!\in \!\seq[j] \ \land \ \seq[j](v) \!\neq \none \ \land \ m \!< \!j \!\leq \! |\seq|\}$. \defnormalend
\end{mydefinition}
If the variable $v$ appears in any previous state (or the current state) within the given justified perspective, we assume that \( v \) has remained unchanged since its most recent timestamp, \( \max(\text{LT}) \), which is the latest time \( v \) was observed in the given perspective (at or before the given timestamp \( m \)).  
If \( v \) has never been observed in the given justified perspective before, we assume that \( v \) has remained unchanged since the earliest timestamp \( \min(\text{RT}) \), which is the first time \( v \)'s value appeared in the given perspective after the given timestamp \( m \).  
Otherwise, \( v \) is considered \( \none \), as the given perspective does not contain \( v \) at all.

Then the \textbf{justified perspective function} can construct the agents' perspectives under the input state sequence.
\begin{mydefinition}[Justified Perspective Function]
\label{def:jpf}
Given the input state sequence $\seq$ as $[s_0,\dots,s_n]$, a \emph{Justified Perspective} (JP) function for agent~$i$, $\f_i: \vec{\statespacecomplete} \rightarrow \vec{\statespacecomplete}$, is defined as follows:
    \[
    \f_i([s_0,\dots,s_n])=[s'_0,\dots,s'_n] 
    \]
    where for $t \in [0, n]$ and $v \in V$:
    \[
        \begin{aligned}
            lt_v & = \max(\{j \mid v \in \observation_i(s_j) \land j \leq t \} \cup \{ -1\}) \text{,} & (1)\\
            e    & = \memorization([s_0,\dots,s_t],lt_v,v), & (2)\\
            s''_t & = \{v \assign e \mid s_t(v)=e \lor v \notin \observation_i(s_t\langle \{v \assign e\} \rangle) \}, & (3)\\
            s'_t  & = s_\none \langle s''_t \rangle. & (4)
        \end{aligned} \defmathend
    \]
\end{mydefinition}
Function \( \f_i \) takes a sequence of states (either the global state sequence or a justified perspective from another agent) and returns the sequence of local states that agent \( i \) believes.  
Line (4) overrides state $s_\none$ with state $s''_t$, which ensures that the output is a complete state sequence, where any missing variables are filled with a placeholder value \(\none\).
\( lt_v \) represents the last timestamp at which agent \( i \) observed the variable \( v \), where ``-1'' is added to ensure it returns an integer.
Using Function \( R \), as shown in Line (2), the value of $v$ that agent $i$ observed (or should have observed) at the relevant timestamp is retrieved.
Subsequently, Line (3) constructs a justified state at timestamp \( t \), while the condition \( v \! \notin\! \observation_i(s_t\langle \{v\! \assign\! e\} \rangle) \) ensures that agent \( i \)'s ``memory'' remains consistent with its current observations at timestamp \( t \).

Then, a ternary semantics is proposed for JP, which employs three truth values: 0 (false), 1 (true), and $\frac{1}{2}$ (unknown). 
\begin{mydefinition}[Ternary semantics]
\label{def:jp:ternary}
Given a state sequence $\seq$ any epistemic formula in Language $L_{K\!B}(\Sigma)$, the ternary function $T$ is defined as, omitting model $M$ for readability:
\begin{supertabular}{@{}ll@{~}l@{~~}l@{~~}r}
  (a) & $T[\seq, r(V_r)]$ & $=$ & 1 if $\pi(\seq[n], r(V_r)) = true$;\\
             &                          &     & 0 else if $\pi(\seq[n], r(V_r)) = false$;\\
             &                          &     & $\unknown$ otherwise\\[1mm]
            (b) & $T[\seq, \varphi \land \psi]$ & $=$ & $\min(T[\seq, \varphi], T[\seq, \psi])$\\[1mm]
            (c) & $T[\seq, \neg \varphi]$    & $=$ & $1 - T[\seq, \varphi]$\\[1mm]
            (d) & $T[\seq, S_i v]$           & $=$ & $\unknown$ if $ v \notin \seq[n]$ or $ i \notin \seq[n]$\\[0.5mm]
             &                          &   & $0$ else if $v \notin \observation_i(\seq[n])$\\
             &                          &   & $1$ otherwise     \\[1mm]
            (e) & $T[\seq, S_i \varphi]$  & $=$ & $\unknown$ if $T[\seq,\varphi] = \unknown$ or $ i \notin \seq[n]$;\\
             &                          &   & $0$ else if $T[\observation_i(\seq), \varphi] = \unknown$;\\
             &                          &   & $1$ otherwise\\[1mm]
            (f) & $T[\seq, K_i \varphi]$ & = & $ T[\seq, \varphi \land S_i\varphi]$\\[1mm]
            (g) & $T[\seq, B_i \varphi]$ & = & $ T[\f_i(s_{\none} \override{\seq}), \varphi]$ \\[1mm]
            \multicolumn{4}{l}{where $n$ is the last (current) timestamp of $\seq$.} & $\blacksquare$\\
\end{supertabular}
\end{mydefinition}

That is, the semantics of the belief is handled by reasoning with agents' generated justified perspectives.
This ternary function $T$ returns $1$ or $0$ when the formula is evaluated as \emph{true} or \emph{false}, respectively.
When there is not enough valid value in the given local state $\seq[n]$ to evaluate a predicate $r(V_r)$ ($\exists v \in V_r, \seq[n](v) = \none$), or the relevant variables (nesting agent's identifier $i$ or targeting variable $v$) are not in the given local state ($i \notin \seq[n]$ or $v \notin \seq[n]$), the ternary function returns $\unknown$, indicating that the truth value of the formula cannot be determined.

In addition, \citeauthor{Hu2023}~\shortcite{Hu2023} proved that their ternary semantics are in \textbf{KD45} axiomatic system and the evaluation of it is in polynomial time. 
At last, they provided experiments to demonstrate its efficiency on EP benchmarks with comparisons against the current state-of-the-art planner PDKB planner~\cite{muise2022}.
\section{Predictive Justified Perspective Model}
\label{sec:model}

As mentioned in Section~\ref{sec:intro}, the ``static'' assumption in the existing EP works (including the JP model) is not practical in many applications.
For example, the ``static'' variable cannot capture the position change of a falling ball, as its position change is not an effect of an agent's action but due to the environmental effect (gravity).
To differentiate from the concept of the ``static'' variable in AI planning, the variables in our model are named as \emph{processual}\footnote{The idea of the process is from PDDL+~\cite{DBLP:journals/jair/FoxL06}.} variable which are state variables whose values can evolve as a result of external environmental processes, independently of the agent's actions. 
To describe a set of processual variables, the processual variable model is denoted as $\Omega$, and defined below.
\begin{mydefinition}[Processual Variable Model]\label{def:omega}
Given $\mathcal{T}$ is a set that includes all processual variable types, $\Omega$ is defined as:
\[
\Omega = (V,\mathcal{T},type,\eta),
\]
where: $type: V \rightarrow \mathcal{T}$, $\eta: V  \rightarrow \mathbb{R}^*, * \in \mathbb{N}$. \defnormalend
\end{mydefinition}
To indicate the changing rules, for each processual variable $v \in V$, the type and coefficients (or parameters) are defined as $type(v)$ and $\eta(v)$. 
A ``static'' variable is modeled by the new framework as a processual variable with a special type \emph{static}.
It is considered as a base case ($*\!\!=\!\!0$) in which $\eta(v)\! =\! \{()\}$.
Unlike classical planning variables, which are modified exclusively by the agent’s action effects, processual variables are updated according to not only the agent's action effects but also exogenous transition rules or continuous dynamics reflecting changes in the environment.
In the remainder of this paper, the term ``variable'' refers to a processual variable unless stated otherwise.

Thus, with the introduction of the processual variable, the PJP model --- following the same signature and language as the JP model introduced in Section~\ref{sec:preliminary} --- can be defined.
\begin{mydefinition}[Model]
The PJP model $M$ is defined as:
\[
M = (Agt, \Omega,\mathbb{D}, \pi, O_1, \ldots, O_{|Agt|}, P\!R),
\]
where $P\!R$ is a set of predictive retrieval functions $pr_{type(v)}$ (Definition~\ref{def:pr_function}); $\Omega$ is the model defined in Definition~\ref{def:omega}; and the rest are adopted from the JP model (Definition~\ref{def:JPM}). \defnormalend
\end{mydefinition}
Incorporating processual variables, the PJP model generates predictive justified perspectives through the \emph{Predictive Justified Perspective Function} (Definition~\ref{def:pjpf}) relying on the \emph{Predictive Retrieval Function}. An abstract definition of the predictive retrieval function, $pr_{type(v)} \in P\!R$, where $type(v) \in \mathcal{T}$ is provided in Definition~\ref{def:pr_function}.
This provides flexibility and expressiveness for the modeler to model any variable with a known type.

\begin{mydefinition}[Predictive Retrieval Function]
    \label{def:pr_function}
    A predictive retrieval function $pr_{type(v)}: \vec{S} \times \mathbb{N} \times V \rightarrow \mathbb{D}$ takes the input of a state sequence $\seq$,  a timestamp $t$ and a variable $v$, and outputs the predicted value of $v$ at $t$ based on $\seq$, where $type(v) \in \mathcal{T}$ is the processual variable $v$'s type.
    The predictive retrieval function $pr_{type(v)}$ should satisfy the following properties.
    Given the input state sequence $\seq =  [s_0, \dots, s_n]$, its prediction $\vec{p}$ is defined as $\vec{p} = [p_0, \dots, p_n]$, where for $t' \in [0,n]$, $p_{t'} = \{ v \assign pr_{type(v)}(\seq,t',v) | v \in V\}$.
\begin{itemize}
    \item \textbf{Preserving Consistency} [\textit{Compulsory}]:
    \[
        \forall v\!\in\!V, \forall t\! <\! |\seq|, v \!\in\! \seq[t] \Rightarrow \svec{p}[t](v) = \seq[t](v)
    \]
    \item \textbf{Recursive Consistency} [\textit{Compulsory}]:
    Let $\vec{W}$ be a subset of the sequence space $\vec{\statespace}$, such that for all $\vec{w} \!\in\! \vec{W}$: $|\vec{w}|\!=\!|\seq|$ and $\forall t\! < \! |\seq| \Rightarrow \seq[t] \subseteq \vec{w}[t] \subseteq \svec{p}[t]$.
    \[
        \forall v \!\in\! V, \forall t \!<\! |\seq|,\forall \vec{w} \!\in\! \vec{W} \Rightarrow pr_{type(v)}(\vec{w},t,v) = \svec{p}[t](v) 
    \]
    \item \textbf{Reconstructive Consistency} [\textit{Optional}]:
    \[
        \forall i \in Agt,\exists \vec{w} \in \vec{\statespace}, \observation_i(\vec{w}) \assign \seq \Rightarrow \seq \assign \observation_i(\svec{p}) \defmathend
    \]
\end{itemize} 
\end{mydefinition}

The prediction function $pr_{type(v)}$ estimates the value of variable $v$ at timestamp $t$ by deriving $v$'s changing pattern based on the given state sequence $\seq$.
A valid predictive retrieval function must be preserving consistent and recursively consistent.
\textbf{Preserving Consistency} ensures the predicted value of the variable is consistent with its input.
This gives $\forall t\! <\! |\seq| \Rightarrow \seq[t] \subseteq \svec{p}[t]$, where $\vec{p}$ is constructed in the above definition.
\textbf{Recursive Consistency} 
ensures that the predictive retrieval function yields stable results when the input state sequence is the original input state sequence ($\seq$) with extra values from its prediction ($\svec{p}$).
This consistency condition requires that for all sequences in the set of sequences $\vec{W}$ (which is the same as the original input sequence with extra assignments from its prediction), applying the predictive retrieval function to $\vec{w}$ yields the same predicted value as in $\vec{p}$.
A simple example is for a sequence that only contains one variable $v'$ with values of $[1,\none,3,\none]$.
By applying $pr_{type(v')}([1,\none,3,\none],t',v')$ for all timestamps $t' < 3$, we have predicted values of $v$ as $[1,2,3,4]$.
Then, for an input sequence that is the same as the original sequence but with extra values from its prediction, such as $[1,2,3,\none]$, the predicted value $pr_{type(v')}([1,2,3,\none],t',v')$ should stay the same for all timestamps $t' < 3$ ($[1,2,3,4]$).
The optional \textbf{Reconstructive Consistency} ensures that for all agents, the predictive function is consistent with their own observation function.
That is, when using the agent's observation as the input, the predicted values should not change the agent's observation.
This property ensures agents' beliefs are justifiable ($B_i K_i \varphi \Rightarrow K_i \varphi$) following the JP model.

Then, following Definition~\ref{def:pr_function}, we give the definition of the base case of the predictive retrieval function, $pr_{static}$.
The function $pr_{static}$ is domain-independent due to the ``static'' assumption of classical planning.
While for other $type(v) \in \mathcal{T}$, $pr_{type(v)}$ is domain-dependent, and they need to be defined by the modeler. 

\begin{subdefinition}[Static Predictive Retrieval Function] 
Given a sequence of states $\seq$, a timestamp $t$, and a variable $v$, the retrieval function, the static predictive retrieval function $pr_{static} \in P\!R$ , is defined as:
\label{def:pr_static}

    $pr_{static}(\seq,t,v) = \memorization(\seq,t,v)$, where $R$ in Definition~\ref{def:jp:R} \defnormalend
\end{subdefinition}

The behavior of $pr_{static}$ is semantically equivalent to the retrieval function $R$ defined in the JP model (following the static assumption). 
Hence, the result of $pr_{static}(\seq,t,v)$ is the same as $R(\seq,t,v)$, ensuring compatibility between the predictive reasoning in the PJP model and the belief reasoning in JP for static variables.
According to the theorem ($\f_i(s)=\f_i(\f_i(s))$) and its proof of the JP model~\cite{Hu2023}, $pr_{static}$ meets all properties in Definition~\ref{def:pr_function}.

Then, it is up to the modeler on how to design $P\!R$ functions for other processual variable types.
With all $P\!R$ functions that follow Definition~\ref{def:pr_function}, we can now give our definition to generate justified perspectives with prediction.

\begin{mydefinition}[Predictive Justified Perspective Function]
\label{def:pjpf}
 Given the input state sequence $[s_0,\dots,s_n]$, a \emph{Predictive Justified Perspective}~(PJP) function for agent~$i$, $\f_i: \vec{\statespace} \rightarrow \vec{\statespacecomplete}$, is defined as follows:
    \[
        \f_i([s_0,\dots,s_n])=[s'_0,\dots,s'_n] 
    \]
    where for $t \in [0, n]$ and $v \in V$:
    \[
        \begin{aligned}
            e     & = pr_{type(v)}(\observation_i([s_0,\dots,s_n]),t,v), & (1)\\
            s''_t & = \{v \assign e \mid v \!\in \!\observation_i(s_t) \lor v\! \notin\! \observation_i(s_t\langle \{v \assign e\} \rangle) \}, & (2)\\
            s'_t  & = s_\none \langle s''_t \rangle. & (3)
        \end{aligned} \defmathend
    \]
\end{mydefinition}

The PJP Function $f_i$ generates local predictive justified perspective for agent~$i$. 
A value $e$ of the variable $v$ is retrieved by its corresponding predictive retrieval $pr_{type(v)}$ in Line (1).
If the type is \emph{static}, the value of $v$ is the same as it is returned by the original retrieval function in the JP model (Definition~\ref{def:jp:R}). 
$v\! \in\! \observation_i(s_t)$ in Line (2) represents agent $i$ sees $v$ at $t$, and thus, the observed $v$ and its value $e$ will be in the output perspective.
When the agent $i$ cannot see $v$ at $t$, by assigning $e$ to $v$ at $t$ (represented as $s_t \override{\{v \assign e\}}$), agent $i$ still cannot see $v$ (represented as $v \!\in \!s_t \override{\{v \assign e\}}$), the predicted value $e$ will be assigned to $v$. 
It means when the agent $i$ cannot see $v$ at $t$, the predicted value $e$ does not cause the conflict of the fact that agent $i$ cannot see $v$ at $t$.
Line~(3) ensures the generated perspectives are complete-state sequences.

\subsection{An Example Using First-Order Polynomials}

To demonstrate the usage of the Predictive Retrieval function and PJP function, we introduce an example using the first-order polynomial as the processual variable types.
Firstly, we extend a classical epistemic planning benchmark, \emph{Grapevine}~\cite{Muise2015PlanningOM}, by incorporating processual variables.
\begin{example}[Grapevine]
\label{example:lie}
There are 3 agents, $a$, $b$, and $c$, each with their own secret: $sa$, $sb$, and $sc$, and there are two rooms, $rm_1$ and $rm_2$.  
The actions include \action{move}, \action{share}, and \action{lie}.  
The shared value of the secret, using $sa$ as an example, is represented as $ssa$, which equals $tsa$ when $a$ is sharing and $lsa$ when $a$ is lying.  
Initially, all agents are in $rm_1$.  
The following action sequence is performed:\\
$[$\action{share}($a$), \action{stop}, \action{share}($a$), \action{stop}, \action{move}($c,rm_2$), \action{lie}($a$), \action{stop}$]$
\end{example}


Here, we provide our definition\footnote{Note: this is just one valid (reasonable) definition of predictive retrieval function for the first-order polynomial. } of $pr_{1st\_poly}$ for Example~\ref{example:lie}.
Since the shared value of the secret ($ssa$) could be deceptive (by lying), the values in an agent's observation could come from different polynomials ($tsa$ or $lsa$).
Therefore, we use segmented prediction in this example.

\addtocounter{mydefinition}{-1}
\addtocounter{subdefinition}{1}
\begin{subdefinition}[First-order Polynomial Predictive Retrieval Function]
\label{def:pr_1st_poly}
    Given AT$= \{ j\! \mid \!v\! \in \!\seq[j] \land \seq[j](v) \neq \none \}$,
            LT $=\! \{j \!\mid\! j\!\in\! \text{AT} \!\land j\! \leq\! t\}$, and RT$=\! \{j\! \mid \!j\in\! \text{AT}\! \land\! j\! >\! t\}$,
    the predictive retrieval function for the first-order polynomial, $pr_{1st\_poly}(\vec{s},t,v)$, can be defined as:
    \[\begin{aligned}
    pr&_{1st\_poly}(\vec{s},t,v)= \\
         &\begin{cases} 
            \none & \text{if } |\text{AT}| = 0\\
            \seq[\max(\text{AT})](v) &\text{if } |\text{AT}| = 1\\
            \frac{t - t_1}{t_1 - t_2}\left(\seq[t_1](v) - \seq[t_2](v)\right)+\seq[t_1](v) & \text{if } |\text{AT}| \geq 2
            \end{cases}
    \end{aligned}
      \] 
    where
    \begin{equation*}
            \begin{cases} 
                \text{if RT} = \emptyset, & t_2 = \max(\text{LT}), \, t_1 = \max(\text{LT} \setminus \{t_2\}) \\ 
                \text{if LT} = \emptyset, & t_1 = \min(\text{RT}), \, t_2 = \min(\text{RT} \setminus \{t_1\}) \\ 
                \text{otherwise}, & t_1 = \max(\text{LT}), \, t_2 = \min(\text{RT})
            \end{cases} \defmathend
    \end{equation*}
\end{subdefinition}

\addtocounter{mydefinition}{1}
The predictive retrieval function for first-order polynomial identifies the most recent two timestamps in terms of the given timestamp $t$ that agent~$i$ observes $v$, denoted as $t_1$ and $t_2$ ($t_1 \!\neq\! t_2$, $\{t_j \mid j \in \text{AT}\}$), such that: 
$t_1 \!< \!t_2 \!\leq \!t$, if timestamp $t$ is at or after the agent's latest observation of $v$;
$t \!<\! t_1 \!<\! t_2$, if timestamp $t$ is before the agent's first observation of $v$;
$t_1 \! \leq \! t \!< \!t_2$, otherwise.
For example, if the last observation is before or at the timestamp $t$, $t_2$ will be the timestamp of the last observation and $t_1$ will be the timestamp of the second-last observation where $t_1\! =\! \max(\text{LT} \setminus \{t_2\})$.

\begin{table}[H]
    \centering
    \resizebox{\linewidth}{!}{
        \begin{tabular}{@{~}cc@{~}c@{~}c@{~}c@{~}c@{~}c@{~}c@{~}c@{~}}
            \toprule
            \textbf{Timestamp}& \textbf{$0$} & \textbf{$1$} & \textbf{$2$} & \textbf{$3$} & \textbf{$4$} & \textbf{$5$} & \textbf{$6$} & \textbf{$7$} \\
            \midrule
            \textbf{Action} &  &\action{share}($a$) & \action{stop} & \action{share}($a$) & \action{stop} & \action{move}($c$,$rm_2$) & \action{lie}($a$) & \action{stop} \\
            \midrule
            $tsa$ & 3 & 4 & 5 & 6 & 7 & 8 & 9 & 10 \\
            $lsa$ & 1 & 2 & 3 & 4 & 5 & 6 & 7 & 8 \\
            $ssa$ & \_ & 4 & \_ & 6 & \_ & \_ & 7 & \_ \\
            \hline
            $O_b(\vec{s})(ssa)$ & \_ & 4 & \_ & 6 & \_ & \_ & 7 & \_ \\
            $O_c(\vec{s})(ssa)$ & \_ & 4 & \_ & 6 & \_ & \_ & \_ & \_ \\
            $f_b(\vec{s})(ssa)$ & 3 & 4 & 5 & 6 & 6.33 & 6.67 & 7 & 7.33 \\
            $f_c(\vec{s})(ssa)$ & 3 & 4 & 5 & 6 & 7 & 8 & 9 & 10 \\
            $f_b(f_a(\vec{s}))(ssa)$ & 3 & 4 & 5 & 6 & 6.33 & 6.67 & 7 & 7.33 \\
            $f_b(f_c(\vec{s}))(ssa)$ & 3 & 4 & 5 & 6 & 7 & 8 & 9 & 10 \\
            $f_c(f_a(\vec{s}))(ssa)$ & 3 & 4 & 5 & 6 & 7 & 8 & 9 & 10 \\
            $f_c(f_b(\vec{s}))(ssa)$ & 3 & 4 & 5 & 6 & 7 & 8 & 9 & 10 \\
            \bottomrule
        \end{tabular}
    }
    \caption{Observation and reasoning of $ssa$ in Example~\ref{example:lie}.}
    \label{tab:ssa_pred_table}
\end{table}

The values of $ssa$ (shared value of $sa$) in each agent's observation (what they saw) and predictive justified perspective (what they believe), which are generated by iteratively applying observation functions and PJP functions, are shown in Table~\ref{tab:ssa_pred_table}.
Before timestamp 5, all agents are in $rm_1$, which results in them having the same (nested) observations and (nested) beliefs.
However, after action \action{move}($c,rm_2$), $c$ is no longer in $rm_1$, and won't be able to ``see'' the following $ssa$ value.
This leads to different beliefs between agent $b$ ($f_b(\seq)$) and agent $c$ ($f_c(\seq)$) about the value of $ssa$, and agent $b$ is able to reason about agent $c$ beliefs ($f_b(f_c(\seq))$).
For example, when generating $s''_4$ (timestamp $4$ in Definition~\ref{def:pjpf}) in $f_b(\seq)$, $s_4''(ssa) = pr_{1st\_poly}(\observation_b(\seq),4,ssa)=6.33$.
This is calculated (Definition~\ref{def:pr_1st_poly}) by identifying $t_1$ and $t_2$ ($3$ and $6$), and using $ssa$'s values ($6$ and $7$) to get its value for timestamp $4$.
But for $s_4''$ in $f_c(\seq)$,  $s_4''(ssa) = pr_{1st\_poly}(\observation_c(\seq),4,ssa)$ would be $7$ by identifying $t_1$ and $t_2$ as $1$ and $3$, and using $ssa$'s values ($4$ and $6$) to predict its value at timestamp $4$.
This is because of the difference between $O_b(\seq)$ and $O_c(\seq)$.
More challengingly, agent $b$'s belief of agent $c$'s belief is generated by $O_c(f_b(\seq))$ (same as $O_c(\seq)$ in this example), resulting in agent $b$ believing that: 1) $ssa \assign 7.33$; and, agent $c$'s belief of $ssa$ is $7$.
%

\subsection{Semantics and Axiomatic Validity}
\label{sec:validity}
By constructing predictive justified perspectives (Definition~\ref{def:pjpf}), the reasoning of the epistemic formulae in $L_{KB}$ follows the ternary semantics defined in the JP model (Definition~\ref{def:jp:ternary}), except that the perspective function is changed from the justified perspective function (Definition~\ref{def:jpf}) to the PJP function (Definition~\ref{def:pjpf}) defined in this paper.

To examine the soundness of the PJP model, we prove our ternary semantics satisfy \textbf{KD45} axioms given all predictive retrieval functions are preserving consistent and recursive consistent.
Moreover, we show that if all predictive retrieval functions are also reconstructive consistent, then we have $f_i(\seq) \assign f_i(f_i(\seq))$, which ensures agents' beliefs are justifiable (aligned with the JP model~\cite{Hu2023}). 
In terms of computational efficiency, the complexity of the ternary semantics for our PJP model is in polynomial if $O$ and $P\!R$ are in polynomial (same as the JP model).
We include the theorem, proofs, and detailed explanation of the above findings in the supplementary material, as they are tangential to the main contributions (Planning community) and may be of interest primarily to a specialized subset of readers (Epistemic Logic community).

\section{Implementation}
\label{sec:planning}
In this section, we explain how the EP problems with a non-static assumption are modeled and encoded, integrating the PJP model as an epistemic logic evaluator.

\subsection{Problem Formalization}
As a state-based approach in EP, an epistemic planning problem with signature $\Sigma$ can be represented by an epistemic reasoning model $M$ and an epistemic planning instance $P$~\cite{Hu2022,Hu2023}.
The planning instance can be represented as $P=(Agt,\Omega,\mathbb{D},E,O_F,I,G_F,F)$, where: $Agt$ and $\mathbb{D}$ are from signature $\Sigma$; $\Omega$ is from $M$ (Definition~\ref{def:omega}); $E$ is the set of all involved epistemic formulae ($E \subseteq L_{K\!B}(\Sigma)$); $I$ is the initial state same as in classical planning; $O_F$ and $G_F$ are the operators (actions) and goal conditions involving epistemic formulae (by external function); and, $F$ is the external functions ($@jp: \vec{\statespace} \times E \to \mathbb{D}$ or $@epi: \vec{\statespace} \times E \to \{0,\unknown,1\}$) that implement ternary semantics (Definition~\ref{def:jp:ternary}) with the PJP model.
The details of $\Omega$, $O_F$, $G_F$, and $F$ are provided in the following sections.

\subsection{Encoding}
\label{sec:planning:encoding}
The EP instances are represented by PDDL+~\cite{DBLP:journals/jair/FoxL06} and integrate the idea of external functions from F-STRIPS~\cite{Geffner2000}.
Similar to PDDL+, $\Omega$ is modeled as processes (we name them rules, as shown below).

\begin{lstlisting}
(:rules
  (static (agent_loc a) [] [])
  (1st_poly (secret sa) [1.0,2.0] [,])
  ...
)
\end{lstlisting}

The type static (Line 2) has no coefficient, while the type first-order polynomial has two coefficients. 
The second square bracket represents the agent's initial belief of the coefficients.
That is, Line 3 indicates that the change is $sa = t + 2$ (where $t$ is the timestamp), and no coefficients are known in advance.

The epistemic formulae appear in the action precondition and goal as external functions, while they could also be included in the action effects.
Two types of external functions, \emph{@epi} and \emph{@jp} are provided.
External function type \emph{@epi} is for evaluating a normal epistemic formula, for example, $T[\seq,\neg B_b \ ssa\assign 6]=1$, as shown below:
\begin{lstlisting}
(= 
  (@epi ("-b[b]") (= (shared sa) 6)) 
      epi.true)
\end{lstlisting}



The external function type \emph{@jp} evaluates the agent's local state variable directly on the agent's justified perspective.
An example action, \action{share\_others\_secret}, is provided in PDDL+ Example~\ref{alg:action}.
It allows an agent to share another agent's secret (\texttt{?s}) with others if the agent has a not-none belief in this secret.
The preconditions ensure that the secret \texttt{?s} is not owned by this agent \texttt{?a} (Line~4) and there is no one sharing anything at the current timestamp (Line~5).
In addition, in Line~6-8, the query \texttt{"b[?a]"} represents whose justified perspective is evaluated on, while the second argument \texttt{(shared ?s)} specifies which variable it is.
This precondition holds if agent \texttt{?a}'s perspective has a not-none value of \texttt{(shared ?s)}.
The effects specify the secret \texttt{?s} being shared (Line~16) at agent \texttt{?a}'s location (Line~11-12), while Line~13-15 represents that the shared value of the secret is the same as the value that the sharing agent \texttt{?a} believes.
This lifted encoding allows the solver to perform actions with epistemic effects lazily without grounding, which is not found in any other epistemic planning approaches.

\begin{algorithm}
\caption{share\_others\_secret}
\label{alg:action}
\begin{lstlisting}
(:action share_others_secret
  :parameters(?a - agent, ?s - secret)
  :precondition(
    (= (own ?a ?s) 0)
    (= (sharing) 0)
    (!= 
      (@jp ("b[?a]") (shared ?s)) 
          jp.none)
  )
  :effect(
    (assign 
      (shared_loc ?s) (agent_loc ?a))
    (assign 
      (shared ?s) 
          (@jp ("b[?a]") (shared ?s))) 
    (assign (sharing) 1)
  )
)
\end{lstlisting}
\end{algorithm}

\subsection{External Functions}
\label{sec:planning:external}
The external functions take the current search path (state sequence $\seq \in \vec{\statespace}$) and the epistemic formula $\varphi \in E$ as input and return a ternary value or a value that the agent believes (in their perspectives) as output depending on the external function types ($@epi$ or $@jp$) as explained in Section~\ref{sec:planning:encoding}.
The $@jp$ functions are simply implementing the PJP function (Definition~\ref{def:pjpf}) iteratively following the encoding, while the $@epi$ functions implement the ternary semantics (Definition~\ref{def:jp:ternary}) with the PJP model. 

\subsubsection{Predictive Retrieval Functions}
The PJP model possesses strong expressiveness and high extensibility, which stem from the design of the predictive retrieval function $pr_{type(.)}\in P\!R$. 
We implemented different $P\!R$ functions for 5 common processual variable types: first-order polynomial, second-order polynomial, power function, sine function, and static.
For different types, different numbers of observations are needed to deduce their coefficients and predict their values.
For all $pr_{type(\cdot)}$ functions used in this paper, if the observations are insufficient, the value is returned in the same way as $pr_{static}$ as a fallback.
This is just one valid design of the predictive retrieval function satisfying the properties in Definition~\ref{def:pr_function} for the insufficient inputs.

Beyond these examples, the model’s flexibility is further demonstrated by its support for incorporating alternative prediction functions (Definition~\ref{def:pr_function}). 
This allows for the integration of learning-based methods such as linear regression, support vector machines, and neural networks.







\begin{table*}[ht]
    \centering
    \resizebox{1\textwidth}{!}{
        \begin{tabular}{ccccccccccccc}
            \toprule
            & $|gen|$ & $|seg|$ & $\tau_p$(ms) & $\tau_t$(s) & Rule of $x$ & $\eta(tsa)$ & $\eta(lsa)$ & $pr_{type(x)}$ & $|plan|$ & Goal \\
            \midrule
            G0 & 81 & 4 & 0.02 & 0.11 & \multirow{6}{*}{$x \assign a\cdot t + b$} &  \multirow{6}{*}{$[1,3]$} & \multirow{6}{*}{$[1,1]$}  & $pr_{1st\_poly}$ & 2 & $tsa  \assign  5 \land B_{b}ssa  \assign  4$ \\
            G1 & 1211 & 251 & 0.45 & 1.78 & &   &   & $pr_{1st\_poly}$ & 4 & $tsa  \assign  7 \land B_{b}ssa  \assign  7$ \\
            G2 & 1211 & 223 & 0.70 & 2.42 & &   &   & $pr_{linear\_reg}$ & 4 & $tsa  \assign  7 \land B_{b}ssa  \assign  7$ \\
            G3 & 421 & 109 & 0.16 & 0.66 & &   &   & $pr_{1st\_poly}$ & 4 & $tsa  \assign  7 \land B_{b}ssa  \assign  8$ \\
            G4 & 58708 & 72020 & 88.83 & 196.92 & &   &   & $pr_{1st\_poly}$ & 7 & $B_{c}ssa  \assign  10 \land B_{a}B_{c}ssa  \assign  \none$ \\
            G5 & 64010 & 85696 & 98.70 & 221.81 & &   &   & $pr_{1st\_poly}$ & 7 & $B_{c}ssa  \assign  10 \land B_{a}B_{c}(ssa \neq 10 \land ssa \neq \none)$ \\
            G6 & 17277 & 1221 & 8.84 & 29.32 & $x = a\cdot t^2+b\cdot t+c$& $[1,0,2]$  & $[1,0,0]$  & $pr_{2st\_poly}$ & 6 & $tsa  \assign  38 \land B_{b}ssa  \assign  38$ \\
            G7 & 29 & 305 & 0.03 & 0.13 & $x = a^t$& $[3]$  &  $[1]$ & $pr_{power}$ & 2 & $tsa  \assign  9 \land B_{b}ssa  \assign  9$ \\
            G8 &17277 & 889 & 15.58 & 37.49 & $x=a \cdot sin (b \cdot t + c)$ & $[8,5,4]$  & $ [1,1,1] $& $pr_{sin}$ & 6 & $tsa  \assign  4.23\land B_{b}ssa  \assign  4.23$ \\ 
            \bottomrule
        \end{tabular}
    }
    \caption{Experimental results}
    \label{tab:results}
    \vspace{-4mm}
\end{table*}
\section{Experiment}
\label{sec:experiments}
Experiments are conducted on the most challenging benchmark domain, Grapevine, with the inclusion of dynamically changing variables.
As described in Example~\ref{example:lie}, it involves three agents ($a$, $b$, and $c$) who can \action{share}, \action{lie}, or \action{move} in two connected rooms ($rm_1$ and $rm_2$), and initially, all three agents are located in $rm_1$.
Unlike Example~\ref{example:lie}, in this experiment, the agents are allowed to \action{share} others' secrets (as in PDDL+ Example~\ref{alg:action}).
It is noted that the other agents do not know whether the shared value (e.g., $ssa$) is true or false. 
Therefore, when they share others' secrets, they can only share the value they believe in, which can be equal to $tsa$, $lsa$, or some other values (e.g.,  from a false prediction). 
To explicitly test agents' beliefs rather than knowledge (derived from direct ``observation''), we enforce that the goal cannot be achieved at any time when sharing occurs.

The following outcome metrics are included to show the performance of the PJP model:
the number of generated nodes~($|gen|$), 
the number of segments~($|seg|$),  
the average PJP function used time~($\tau_p$),
the total execution time~($\tau_t$),
the rule of processual variables~(Rule of $x$) and their coefficients ($\eta(tas)$ and $\eta(las)$),
the predictive retrieval functions used~($pr_{type(x)}$),
the plan length~($|plan|$), 
and the goal conditions~(Goal). 
It is noted that $pr_{1st\_poly}$ is defined in Definition~\ref{def:pr_1st_poly}, while the formal definitions and implementation methodologies of all other $pr_{type(x)}$ functions are provided in the supplementary material.
All source code, including domain encoding, predictive retrieval function implementations, and plan validator, has been released as an open-source project in: \textbf{link anonymized for submission},

The experiments are run on a laptop equipped with an Intel\textsuperscript{\textregistered} Core\textsuperscript{\texttrademark} i7-10510U processor and 16 GB of RAM.
The timeout is set to 600 seconds, and the memory limit is 8 GB.
To solely demonstrate the effectiveness of our PJP model, the vanilla version of \emph{Breadth-First Search}~(BFS) is used to avoid influence from the search algorithm.
As mentioned earlier, there is no existing planner tool that can reason about agents' nested beliefs with predictions, so we have no other approach to compare against.

The result is shown in Table~\ref{tab:results}.
Variable $tsa$ and $lsa$ are first-order polynomials in G0-G5, while we also show other example types in G6, G7, and G8.
G0 shows the base case where $a$ only shared $tsa$ once, resulting in $b$ not having enough values\footnote{There are still 4 predictions due to the search frontier (reached) to the depth of 3.} to predict $ssa$ (fall back to static).
G3 shows that $b$ believes $ssa$ is a value that never occurred in the current search path (as $tsa\leq 7$ and $lsa \leq 5$ with all lengths of visited search paths smaller than $5$).
This is caused by $a$ lying at $s_1$ ($ssa=lsa=2$) and sharing at $s_3$ ($ssa=tsa=6$), resulting in $b$ believes $ssa$ is $[0,2,4,6,8]$.
G4 and G5 are challenging instances such as agent $c$ believes the correct $ssa$ value, while agent $a$ is either not aware of that or has false beliefs on $c$ about $ssa$.
%
%
An interesting point that can be noticed is that the choice of the predictive retrieval function may influence the efficiency of the planner.
For example, the $pr_{linear\_reg}$ used in G2 is less efficient (from $\tau_p$) compared to the analytical method in G1.

\subsection{Discussion}
As demonstrated in the previous sections, the PJP model provides modelers with a high degree of flexibility in the design of the predictive retrieval functions. 
It brings a new angle of view to revisit the existing domain.
For example, in the Grapevine domain, a recent lie from someone could result in agents losing a truthful belief about one's secret.
This can be found in Example~\ref{example:lie}, where agent $b$'s belief (using $pr_{1st\_poly}$ in Definition~\ref{def:pr_1st_poly}) about $a$'s secret ($B_b ssa$) was aligned with $tsa$ but it is updated by $a$'s lie in timestamp $6$.
In the presence of lies, incorrect patterns are formed, influencing the prediction of this value both prior to and following the lies. 
A potential solution is an alternative predictive justified perspective function $pr_{dom\_1st\_poly}$ that takes the majority of all observations to reject those outliers.
Specifically, it generates the coefficients from all valid (not $\none$) observed values.
The pair of coefficients with the most occurrences is selected as the fitting parameters to predict the target value.
A slightly complex example is provided in Example~\ref{example:outlier}.

\begin{example}
\label{example:outlier}
Following the problem setting earlier in this section, the objective for agent~$a$ is to have both $b$ and $c$ believe $a$'s true secret, and for $b$ to believe the true secret of $a$. Meanwhile, agent~$c$ aims to sabotage this by lying to $b$ at the end.
Agent~$c$ must first learn the pattern of $sa$, and then lie to $b$ using his own pattern ($lsc = 2t + 4$).
\end{example}


Since the objectives of agents have conflicts, the planner cannot find a centralized optimal plan for ``agent $c$ would like to sabotage this by lying to $b$''.
Therefore, similar to \citeauthor{muise2022}~\shortcite{muise2022}, we use the planner only to validate the plan:
[\action{share}($a,sa,tsa$), \action{stop}, 
\action{share}($a,sa,tsa$), \action{stop},
\action{share}($a,sa,tsa$), \action{stop},
\action{move}($a,rm_2$),
\action{lie}($c,sa,lsc$), \action{stop}].

With the new $P\!R$ function ($pr_{dom\_1st\_poly}$) the above plan makes agent $b$ still hold the correct belief about $ssa$ ($tsa \assign 12 \land B_b ssa \assign 12$).
The linear regression ($pr_{linear\_reg}$) results in $B_b ssa  \assign  20.24$, while the default predictive retrieval function ($pr_{1st\_poly}$) in Definition~\ref{def:pr_1st_poly} results in $B_b sa  \assign  20$.
With the flexibility of the PJP model, other commonly used outlier rejection algorithms such as \emph{three-sigma rule} and \emph{Random Sample Consensus} (RANSAC)~\cite{FISCHLER1987726} can also be implemented as an external function.
This example demonstrates the possibilities to utilize the PJP model incorporating external functions to solve broader problems.

\section{Conclusion \& Future Work}
\label{sec:conclusion}
In conclusion, to fill the gap between real-world application and the ``static environment'' assumption in existing EP studies, we proposed the PJP model by introducing the processual variables and predictive retrieval functions.
Following the same structure as the JP model, our approach retains its advantages of being action-model-free (not requiring explicit epistemic effects in actions) and capable of arbitrary nesting of beliefs.
The experiments demonstrate the modeler has the flexibility to make agents generate beliefs with reasonable predictions by incorporating different external functions, which shows the PJP model's potential to be applied in various applications.
However, the proposed model still requires the pre-defined processual variable model, constraining its applications where consensus is impossible to be reached.
A potential solution is to introduce the learning-based methods to eliminate the predefined rules, improving the model's adaptability.

\appendix
\section{Semantics and Axiomatic Validity}
\label{sec:validity}
By constructing predictive justified perspectives (Definition~\ref{def:pjpf}), the reasoning of the epistemic formulae in $L_{K\!B}$ follows the ternary semantics defined in the JP model (Definition~\ref{def:jp:ternary}), except that the perspective function is changed from the JP function (Definition~\ref{def:jpf}) to the PJP function (Definition~\ref{def:pjpf}).

The complexity of the ternary semantics is in polynomial, if $O$ and $P\!R$ are in polynomial.
Specifically, given a ternary function instance $T[M,\seq,\varphi]$, the worst-case scenario is that $\varphi$ is a belief formula with a depth of $d$. 
Then, according to its definition, the complexity class is $\Theta(|V| \cdot d \cdot |\seq|) \cdot \Theta (o) \cdot \Theta (pr)$, where $o$ and $pr$ are the most complex observation function $(o \in \{O_1, \dots, O_{|Agt|}\})$ and predictive retrieval function $(pr \in P\!R)$ respectively. 
That is, the depth of the epistemic formula, which usually causes exponential blowup in DEL or Knowledge-Base approaches, would not cause exponential blowup in the PJP model (similar to the JP model).

To examine the soundness of the PJP model, 
we show that our ternary semantics follows the \textbf{KD45} axiomatic system for all epistemic logics related to belief.
At the end of this section, we also discuss a $K\!B$ axiom that is affected by the property of the predictive retrieval function. 

Firstly, we show the agents' predictive justified perspectives are consistent with their own observations (Theorem~\ref{thm:jpcontaino}).
\begin{theorem}
    \label{thm:jpcontaino}
    Given a state sequence $\seq$, let $i$ be an agent from $Agt$, for any timestamp $t$, we have:
    \[
        \observation_i(\seq[t]) \subseteq \f_i(\seq)[t]
    \]
\end{theorem}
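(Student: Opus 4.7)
The plan is to unfold the PJP function definition element-by-element and show that every assignment appearing in $\observation_i(\seq[t])$ survives into $\f_i(\seq)[t]$. Fix an arbitrary agent $i$, timestamp $t$, and an assignment $(v\!=\!a) \in \observation_i(\seq[t])$. By the Contraction property of the observation function (Definition~\ref{def:observation}), we immediately get $(v\!=\!a) \in \seq[t]$, so in particular $\seq[t](v) = a$ and $v \in \observation_i(\seq[t])$.

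Next I would invoke Preserving Consistency of the predictive retrieval function. In Line~(1) of Definition~\ref{def:pjpf}, the predicted value is $e = pr_{type(v)}(\observation_i(\seq), t, v)$. Using the convention introduced after Definition~\ref{def:observation} that $\observation_i(\seq)[t] = \observation_i(\seq[t])$, we have $v \in \observation_i(\seq)[t]$, so Preserving Consistency yields $e = \observation_i(\seq)[t](v) = \observation_i(\seq[t])(v) = a$. This is the crucial step that links the observation back into the prediction.

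With $e = a$ established, I would turn to Line~(2). Since $v \in \observation_i(s_t)$ (where $s_t = \seq[t]$), the first disjunct of the guard in the set-builder for $s''_t$ is satisfied, so $(v\!=\!a) \in s''_t$. Finally, Line~(3) sets $s'_t = s_\none \langle s''_t \rangle$; by the definition of the state override function, $s''_t \subseteq s_\none \langle s''_t \rangle = s'_t$, so $(v\!=\!a) \in s'_t = \f_i(\seq)[t]$. As $(v\!=\!a)$ was an arbitrary element of $\observation_i(\seq[t])$, the containment follows.

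There is no real hard step here; the proof is essentially a diagram-chase through Definitions~\ref{def:pr_function} and~\ref{def:pjpf}. The one place that requires care, and which I expect to be the main obstacle to writing cleanly, is being precise about the notational identification $\observation_i(\seq)[t] = \observation_i(\seq[t])$ when invoking Preserving Consistency — that property is stated for an input sequence $\seq$, but we apply it with the shifted input $\observation_i(\seq)$, and the correctness hinges on the fact that $v \in \observation_i(\seq[t])$ lifts to $v \in \observation_i(\seq)[t]$ by the convention the paper adopts for applying $\observation_i$ pointwise. Once that notational point is handled, the rest is routine set containment.
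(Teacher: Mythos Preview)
Your proposal is correct and follows essentially the same route as the paper's own proof: pick an arbitrary assignment in $\observation_i(\seq[t])$, use Preserving Consistency of $pr_{type(v)}$ on the input $\observation_i(\seq)$ to show the predicted $e$ coincides with the observed value, invoke the first disjunct in Line~(2) of Definition~\ref{def:pjpf} to place the assignment into $s''_t$, and finish via the override in Line~(3). Your version is in fact slightly more explicit than the paper's, in that you call out the pointwise convention $\observation_i(\seq)[t] = \observation_i(\seq[t])$ and the Contraction step, both of which the paper leaves implicit.
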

\begin{proof}
For any $v$ in $V$ such that $v\!\!\in\!\! \observation_i(\seq[t])$, according to Line (1) in Definition~\ref{def:pjpf} ($e \assign pr_{type(v)}(\observation_i(\seq),t,v)$) and Preserving Consistency in Definition~\ref{def:pr_function}, the retrieved predictive value of $v$ equals $\observation_i(\seq[t])(v)$.
Then, following $v \!\in\! \observation_i(\seq[t])$, the generated $s_t''$ contains the assignment $v \assign \observation_i(\seq[t])(v)$.
In addition, $s_t''$ overriding the none state $s_{\none}$ has no effects on existing assignments in $s_t''$, which means assignment $v \assign \observation_i(\seq[t])(v)$ is in $\f_i(\seq)[t]$.
Hence, $\observation_i(\seq[t])(v) = f_i(\seq)[t](v)$, which concludes $\forall v \in V,\ v \in \observation_i(\seq[t]) \Rightarrow \observation_i(\seq[t])(v) = f_i(\seq)[t](v)$.
Therefore, Theorem~\ref{thm:jpcontaino} holds.
\end{proof}

Intuitively, agent's predictive justified perspective $f_i(\seq)$ of the given state sequence $\seq$ should be the same as their prediction $f_i(f_i(\seq))$ of what they predicted ($f_i(\seq)$).
Thus, we propose the following theorem.
\begin{theorem}[Idempotence of Predictive Retrieval Function]
\label{thm:fififi}
A predictive justified perspective function $f_i$ is idempotent:
\[
f_i(f_i(\seq)) = f_i(\seq)
\]
\end{theorem}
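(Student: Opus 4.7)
The plan is to show that applying $f_i$ a second time to $f_i(\seq)$ reproduces $f_i(\seq)$ by demonstrating that each ingredient of Definition~\ref{def:pjpf} --- the retrieved value $e$, the filtered set $s''_t$, and the final overridden state $s'_t$ --- is preserved under the second application. Set $\vec{s}' = f_i(\seq)$ and, for fixed $t$ and $v$, write $e = pr_{type(v)}(\observation_i(\seq),t,v)$ for the first-round prediction and $e' = pr_{type(v)}(\observation_i(\vec{s}'),t,v)$ for the second-round prediction.

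The central lemma is the identity $\observation_i(\vec{s}'[t]) = \observation_i(\seq[t])$ for every $t$. For the $\supseteq$ direction, Theorem~\ref{thm:jpcontaino} gives $\observation_i(\seq[t]) \subseteq \vec{s}'[t]$; applying $\observation_i$ together with monotonicity yields $\observation_i(\observation_i(\seq[t])) \subseteq \observation_i(\vec{s}'[t])$, and idempotence collapses the left side to $\observation_i(\seq[t])$. For the $\subseteq$ direction, I would use the construction of $s''_t$: any variable contributed to $\vec{s}'[t]$ that lies outside $\observation_i(\seq[t])$ was included exactly because the test $v \notin \observation_i(s_t\langle\{v\assign e\}\rangle)$ holds for that value, and this non-observability is intended to lift to the fully overridden state via monotonicity. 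With this lemma, $\observation_i(\vec{s}') = \observation_i(\seq)$ as sequences, so functionality of $pr$ immediately gives $e' = e$; and with $e' = e$ in hand, the filtering step in the second application produces the same membership set $s''_t$ as in the first, so $f_i(\vec{s}')[t] = s_\none\langle s''_t\rangle = \vec{s}'[t]$. Ranging over $t$ finishes the theorem.

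The main obstacle is the $\subseteq$ direction of the central lemma. The membership condition for $s''_t$ is stated for a \emph{single-variable} override $s_t\langle\{v\assign e\}\rangle$, but $\vec{s}'[t]$ is the result of overriding with many such assignments at once and then filling remaining slots with $\none$. Even though each individual assignment is invisible to $\observation_i$, the joint state might in principle expose some of them, and the $\none$ placeholders could in principle interact with $\observation_i$ in subtle ways. I expect to resolve this by a careful monotonicity argument, adding the assignments one variable at a time and invoking the hypothesis on each intermediate state, and by noting that $s_\none$ is the minimal complete state so that the $\none$ padding contributes no new observable content beyond what $s''_t$ already provides. Recursive Consistency of $pr$ is not directly invoked in the core argument but is the companion property that prevents the prediction itself from drifting on the second pass --- it would be the fallback invoked if the observation identity above failed to be exact but only sandwiched between $\observation_i(\seq)$ and the full prediction $\vec{p}$.
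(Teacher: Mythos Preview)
Your central lemma --- the equality $\observation_i(\vec{s}\,'[t]) = \observation_i(\seq[t])$ --- is not provable from the properties at hand, and in fact the paper explicitly denies its $\subseteq$ direction in the discussion preceding Theorem~\ref{thm:bkisk}: ``the predicted values of those unobserved variables might allow agent $i$ to see more in $i$'s own perspective,'' i.e.\ $\observation_i(f_i(\seq)[t]) \subseteq \observation_i(\seq[t])$ need not hold. The obstacle you flag is real and fatal to the primary plan: the single-variable test $v \notin \observation_i(s_t\langle\{v\assign e\}\rangle)$ in Line~(2) only certifies that assigning $e$ to $v$ \emph{in isolation} leaves $v$ invisible from the original $s_t$; it says nothing about visibility once other predicted values (for other variables) are simultaneously installed. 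A predicted value for some $u$ can flip the visibility of $v$. Your proposed repair --- ``a careful monotonicity argument, adding the assignments one variable at a time'' --- cannot work, because monotonicity of $\observation_i$ points the wrong way: enlarging the state can only enlarge the observation, so it helps you show more things become visible, never fewer. The exact equality you want is essentially the content of the \emph{optional} Reconstructive Consistency property, which is not assumed for this theorem.

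What you relegate to a ``fallback'' is in fact the paper's actual argument, and Recursive Consistency is not a companion property but the engine of the proof. The paper establishes only the sandwich $\observation_i(\seq[t]) \subseteq \observation_i(f_i(\seq)[t]) \subseteq \vec{p}[t]$: the lower bound is your $\supseteq$ direction (Theorem~\ref{thm:jpcontaino} plus monotonicity and idempotence), and the upper bound comes from $\observation_i(s'_t) = \observation_i(s''_t) \subseteq s''_t \subseteq \vec{p}[t]$ via Contraction and the fact that $\none$-padding is unobserved. With this sandwich, Recursive Consistency directly yields $pr_{type(v)}(\observation_i(f_i(\seq)),t,v) = \vec{p}[t](v) = e$, and the rest follows. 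So you should promote the fallback to the main line and drop the attempt to prove the observation identity.
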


\begin{proof}
Let the input sequence be $\seq = [s_0, \dots, s_n]$.
First, we can construct a prediction $\vec{p} = [p_0, \dots, p_n]$ based on agent $i$'s observation $\observation_i(\seq)=[\observation_i(\seq[0]),\dots,\observation_i(\seq[n])]$ of the given input state sequence $\seq$ following the same process in Definition~\ref{def:pr_function}, where for $t' \in [0,n]$, $p_t = \{ v' \assign e' \mid e' = pr_{type(v')}(\observation_i(\seq),t',v')\}$.

According to the Preserving Consistency (Definition~\ref{def:pr_function}) of all predictive retrieval functions, we have for all $t < |\seq|$, $\observation_i(\seq[t]) \subseteq \svec{p}[t]$.
Following Line (1) and Line (2) in Definition~\ref{def:pjpf} during the construction of $f_i(\seq)$, for all $t < |\seq|$, since $s''_t = \{v \assign e \mid v \!\in \!\observation_i(s_t) \lor v\! \notin\! \observation_i(s_t\langle \{v \assign e\} \rangle) \}$ ($e = pr_{type(v)}(\observation_i([s_0,\dots,s_n]),t,v)$), we have $s''_t \subseteq \svec{p}[t]$.
Then, Line (3) of the same definition fills the missing variables (of $s_t''$) with none value assignments.
It is noted that those added missing variables by Line (3) are not in agent $i$'s observation, which means $\observation_i(s_t') = \observation_i(s_t'')$.
According to the Contraction property of the observation function (in Definition~\ref{def:observation}), we have $\observation_i(s_t') \subseteq s''_t \subseteq \svec{p}[t]$.
From Theorem~\ref{thm:jpcontaino}, we have $\observation_i(s_t) \subseteq s_t'$.
According to Idempotence and Monotonicity of the observation function, $\observation_i(s_t) \subseteq \observation_i(s_t')$.

Since $f_i(\seq)$ is formed by $[s_0',\dots,s_n']$, the input for each variable's predication is $\observation_i(f_i(\seq))$ when generating $f_i(f_i(\seq))$.
In addition, $\observation_i(f_i(\seq))$ is effectively $[\observation_i(s_0'),\dots, \observation_i(s_n')]$.
According to the paragraph above, for all timestamps $t' \! < \! |\seq|$, $\observation_i(s_t) \! \subseteq \! \observation_i(s_t') \! \subseteq \! \svec{p}[t]$.
Due to the Recursive Consistency of all predictive retrieval functions, we have $\forall t \! < \! |\seq|, \forall v \! \in\! V \Rightarrow pr_{type(v)}(\observation_i(f_i(\seq)),t,v)$ $ \!\assign \!  pr_{type(v)}(\vec{p},t,v) \! \assign\! pr_{type(v)}(\observation_i(\seq),t,v)$.
Comparing the construction of $f_i(\seq)$ and $f_i(f_i(\seq))$ from Definition~\ref{def:pjpf}, since Line (1) retrieves the same value, Line (2) and Line (3) will have the same effects. 
Therefore, Theorem~\ref{thm:fififi} holds.
\end{proof}

\vspace{-2mm}
With Theorem~\ref{thm:fififi}, we can now show that the ternary semantics $T$ (Definition~\ref{def:jp:ternary}) in the PJP model satisfies the axioms of the \textbf{KD45} system.
\begin{theorem}
    The ternary semantics for the belief operator $B$ follows the axiomatic system \textbf{KD45} for any agent $i\in Agt$: 
    \vspace{2mm}
    
    \begin{tabular}{@{~}l@{~}l}
    	\textbf{K} & (Distribution):           $B_i \varphi \land B_i(\varphi \Rightarrow \psi) \Rightarrow B_i \psi $\\[1mm]
    	\textbf{D} & (Consistency):            $B_i \varphi \Rightarrow \neg B_i \neg \varphi $\\[1mm]
    	\textbf{4} & (Positive Introspection): $B_i \varphi \Rightarrow  B_i B_i \varphi $\\[1mm]
    	\textbf{5} & (Negative Introspection): $\neg B_i \varphi \Rightarrow  B_i \neg B_i \varphi $\\
    \end{tabular} 
\end{theorem}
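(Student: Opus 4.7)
The plan is to prove each axiom in turn by translating $B_i$ via clause~(g) of Definition~\ref{def:jp:ternary}, which replaces $T[\seq,B_i\varphi]$ with $T[\f_i(s_\none\override{\seq}),\varphi]$. I read the axioms in the standard ternary sense: whenever the antecedent evaluates to $1$ at a state sequence $\seq$, the consequent must also evaluate to $1$ at $\seq$. Throughout, $\varphi\Rightarrow\psi$ is treated as the derived operator $\neg(\varphi\land\neg\psi)$, whose ternary truth value under clauses~(b) and~(c) is $1-\min(T[\cdot,\varphi],1-T[\cdot,\psi])$.

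Axioms \textbf{K} and \textbf{D} are the easy cases and follow directly from clauses (b), (c) and (g). For \textbf{K}, assuming $T[\seq,B_i\varphi]=1$ and $T[\seq,B_i(\varphi\Rightarrow\psi)]=1$, clause~(g) pushes both premises to the single sequence $\f_i(s_\none\override{\seq})$, and a short arithmetic calculation on $\min$ and $1-\cdot$ forces $T[\f_i(s_\none\override{\seq}),\psi]=1$, which reassembles to $T[\seq,B_i\psi]=1$. For \textbf{D}, if $T[\f_i(s_\none\override{\seq}),\varphi]=1$, then clause~(c) gives $T[\f_i(s_\none\override{\seq}),\neg\varphi]=0$, so $T[\seq,B_i\neg\varphi]=0$, and one more application of~(c) delivers $T[\seq,\neg B_i\neg\varphi]=1$.

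The substantive cases are the introspection axioms \textbf{4} and \textbf{5}, and both hinge on Theorem~\ref{thm:fififi} together with one bookkeeping observation: because Line~(3) of Definition~\ref{def:pjpf} pads every state with $s_\none$, the sequence $\f_i(\seq')$ is always a complete state sequence, hence $s_\none\override{\f_i(\seq')}=\f_i(\seq')$. Combining this with Theorem~\ref{thm:fififi} yields the collapse
\[
    \f_i\bigl(s_\none\override{\f_i(s_\none\override{\seq})}\bigr) = \f_i\bigl(\f_i(s_\none\override{\seq})\bigr) = \f_i(s_\none\override{\seq}).
\]
Plugging this into two applications of clause~(g), axiom \textbf{4} reduces to the tautology $T[\seq,B_i\varphi]=T[\seq,B_i\varphi]$. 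Axiom \textbf{5} follows analogously after one additional use of clause~(c): if $T[\seq,\neg B_i\varphi]=1$ then $T[\f_i(s_\none\override{\seq}),\varphi]=0$, and by the collapse identity this equals $T[\f_i(s_\none\override{\f_i(s_\none\override{\seq})}),\varphi]=0$, which is precisely $T[\seq,B_i\neg B_i\varphi]=1$.

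The main obstacle I anticipate is the careful bookkeeping around the $s_\none\override{\cdot}$ padding in the nested-belief cases. The collapse identity requires verifying that every component state of $\f_i(s_\none\override{\seq})$ already mentions every variable in $V$ (so the outer $s_\none$ adds nothing), and that sequence lengths align so that Theorem~\ref{thm:fififi} applies as stated. Once this routine check is carried out from Line~(3) of Definition~\ref{def:pjpf} and the pointwise definition of $\override{}$ on sequences, the four axioms fall out mechanically from the semantic clauses, with no assumption on $P\!R$ beyond the preserving and recursive consistency already used to establish Theorem~\ref{thm:fififi}.
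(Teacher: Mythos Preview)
Your proposal is correct and follows essentially the same route as the paper: axioms \textbf{K} and \textbf{D} by unwinding clauses~(b), (c), (g) at the single perspective sequence, and axioms \textbf{4} and \textbf{5} via the idempotence result Theorem~\ref{thm:fififi}. The only real difference is that you are more scrupulous than the paper about the $s_\none\override{\cdot}$ padding in clause~(g): the paper writes $T[\f_i(\seq),\varphi]$ in place of $T[\f_i(s_\none\override{\seq}),\varphi]$ throughout its proof, silently relying on the fact that the outer sequence is complete, whereas you spell out the collapse identity $s_\none\override{\f_i(\cdot)}=\f_i(\cdot)$ from Line~(3) of Definition~\ref{def:pjpf} before invoking Theorem~\ref{thm:fififi}. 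That extra care is appropriate and does not change the argument's substance.
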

\begin{proof}
    For Axiom \textbf{K}, $T[\seq, B_i \varphi \land B_i(\varphi \Rightarrow \psi)] \assign 1$ is equivalent to $\min(T[\seq, B_i \varphi],T[\seq, B_i (\varphi \Rightarrow \psi)]) \assign 1$, which indicates both $T[f_i(\seq), \varphi]$ and $T[f_i(\seq), \varphi \Rightarrow \psi]$ are $1$.
    Therefore, we have $T[f_i(\seq), \psi] \assign 1$, which makes $T[\seq, B_i \psi] \assign 1$.

    For Axiom \textbf{D},  $T[\seq, B_i \varphi] \assign 1$ means $T[f_i(\seq), \varphi] \assign 1$.
    $T[\seq, \neg B_i \neg \varphi] \assign 1-T[\seq, B_i \neg \varphi]$.
    Since $T[\seq, B_i \neg \varphi] \assign T[f_i(\seq), \neg \varphi] \assign 1 - T[f_i(\seq), \varphi]$, we have $T[\seq, B_i \neg \varphi] \assign 0$, which means  $T[\seq, \neg B_i \neg \varphi] \assign 1$.

     For Axiom \textbf{4}, the premise $T[\seq, B_i  \varphi] \assign 1$ means $T[f_i(\seq), \varphi] \assign 1$.
     By Theorem~\ref{thm:fififi}, $f_i(f_i(\seq))  \assign  f_i(\seq)$
     Therefore, $T[f_i(\seq), \varphi] \assign  T[f_i (f_i (\seq)), \varphi] \assign 1$, which means that $T[f_i(\seq), B_i  \varphi] \assign T[\seq, B_i B_i  \varphi] \assign 1$.
     Thus, Axiom \textbf{4} holds.

     Axiom \textbf{5} can be proved in a similar way as \textbf{D} and \textbf{4}.
\end{proof}

In addition to \textbf{KD45}, the PJP model straightforwardly satisfies the $K\!B$ axiom \textbf{KB1} ($K_i \varphi \Rightarrow B_i \varphi$) and \textbf{KB2} ($B_i \varphi \Rightarrow K_i B_i \varphi$), based on Theorem~\ref{thm:jpcontaino}.
In many epistemic logic systems, Axiom \textbf{D}, Axiom \textbf{5}, and Axiom \textbf{KB1} cannot be true at the same time due to the ``unwanted'' axiom ($B_i K_i \varphi \Rightarrow K_i \varphi$); however, as \citeauthor{guang2025thesis}~\shortcite{guang2025thesis} claimed, in \textbf{justified} knowledge and belief systems (e.g. the JP model), the ``unwanted'' axiom is valid because of their strict definition of knowledge and belief. 
Similarly, we discuss whether the ``unwanted'' axiom holds in our PJP model.

Given $M  \! \assign \! (Agt, \Omega,\mathbb{D}, \pi, O_1, \ldots, O_{|Agt|}, P\!R)$, a PJP model instance, where all $pr_{type(\cdot)} \! \in \! P\!R$ satisfy the preserving consistency and recursive consistency, the ``unwanted'' axiom is not necessarily guaranteed to hold.
This is because what agent $i$ sees in $i$'s predictive justified perspective ($f_i(\seq)$) could be more than what agent $i$ sees in the given state sequences $\seq$.
The original observation of agent $i$ ($\observation_i(\seq)$) can be seen by $i$ from the generated $i$'s predictive justified perspective. 
That is, $\forall t < |\seq| \Rightarrow \observation_i(\seq[t]) \subseteq \observation_i(f_i(\seq)[t])$.
However, the predicted values of those unobserved variables might allow agent $i$ to see more in $i$'s own perspective.
In other words, $\forall t < |\seq| \not\Rightarrow \observation_i(f_i(\seq)[t]) \subseteq \observation_i(\seq[t])$.
This results in what agent $i$ believes that $i$ knows could be inconsistent with what $i$ actually knows.

However, if all  $pr_{type(\cdot)} \in P\!R$ also satisfy the optional reconstructive consistency, the ``unwanted'' axiom can be guaranteed to hold.
\begin{theorem}
    \label{thm:bkisk}
    Given $M  \! \assign \! (Agt, \Omega,\mathbb{D}, \pi, O_1, \ldots, O_{|Agt|}, P\!R)$, a PJP Model instance, where all $pr_{type(\cdot)} \in P\!R$ satisfy preserving consistency, recursive consistency, and reconstructive consistency, the following theorem holds:
    \[
        \forall \varphi, B_i K_i \varphi \Rightarrow K_i \varphi
    \]
\end{theorem}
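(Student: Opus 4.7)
The plan is to reduce the theorem to one key invariance lemma: under reconstructive consistency, $\observation_i(\f_i(\seq)[t]) = \observation_i(\seq[t])$ for every $t$, i.e., agent $i$'s predictive justified perspective is observationally indistinguishable from its input. With this lemma in hand, $B_i K_i \varphi \Rightarrow K_i \varphi$ follows by unfolding the ternary semantics of $B_i$, $K_i$, and $S_i$, together with a state-inclusion monotonicity of $T$.

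For the lemma, the inclusion $\observation_i(\seq[t]) \subseteq \observation_i(\f_i(\seq)[t])$ is immediate from Theorem~\ref{thm:jpcontaino} by applying $\observation_i$ and invoking its monotonicity and idempotence. For the reverse inclusion I would let $\vec{p}$ be the prediction sequence with $\vec{p}[t](v) = pr_{type(v)}(\observation_i(\seq),t,v)$. Taking $\vec{w} = \seq$ in reconstructive consistency—so that $\observation_i(\vec{w}) = \observation_i(\seq)$ holds trivially—yields $\observation_i(\vec{p}[t]) = \observation_i(\seq[t])$. By preserving consistency $\seq[t] \subseteq \vec{p}[t]$, so for any $v \notin \observation_i(\seq[t])$ the override $s_t \override{\{v \assign \vec{p}[t](v)\}}$ remains a subset of $\vec{p}[t]$, and monotonicity of $\observation_i$ keeps $v$ unobserved there. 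Hence the per-variable filter in Line~(2) of Definition~\ref{def:pjpf} admits every assignment of $\vec{p}[t]$, giving $\f_i(\seq)[t] = \vec{p}[t]$ and $\observation_i(\f_i(\seq)[t]) = \observation_i(\vec{p}[t]) = \observation_i(\seq[t])$.

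To finish, suppose $T[\seq, B_i K_i \varphi] = 1$ and set $\seq^* = s_\none \override{\seq}$. Clause~(g) of Definition~\ref{def:jp:ternary} gives $T[\f_i(\seq^*), K_i \varphi] = 1$, which by clause~(f) splits into $T[\f_i(\seq^*), \varphi] = 1$ and $T[\f_i(\seq^*), S_i \varphi] = 1$; combined with clause~(e) the latter yields $T[\observation_i(\f_i(\seq^*)), \varphi] = 1$. Padding by $s_\none$ only introduces $\none$-valued assignments, which are not returned by $\observation_i$, so $\observation_i(\seq^*) = \observation_i(\seq)$, and the lemma then gives $\observation_i(\f_i(\seq^*)) = \observation_i(\seq)$. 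Hence $T[\observation_i(\seq), \varphi] = 1$. A routine structural induction on $\varphi$—using that adding assignments never disturbs the truth value of an already-defined predicate, and chaining through the lemma at the modal cases—shows $T[\cdot, \varphi]$ is monotone under state inclusion, so $T[\seq, \varphi] = 1$. Combining this with $T[\observation_i(\seq), \varphi] = 1$ gives $T[\seq, S_i \varphi] = 1$, and therefore $T[\seq, K_i \varphi] = T[\seq, \varphi \land S_i \varphi] = 1$.

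The hard step is expected to be the identity $\f_i(\seq)[t] = \vec{p}[t]$: making precise that the per-variable filter in the PJP construction never discards a predicted value once reconstructive consistency enforces the global observational invariance between $\vec{p}$ and $\seq$. The state-inclusion monotonicity of $T$ is more routine, but the induction still needs to be closed carefully at the $B_i$ and $S_i$ cases, where the lemma itself must be reapplied.
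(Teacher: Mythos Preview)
Your overall plan---establish the observational invariance $\observation_i(\f_i(\seq))=\observation_i(\seq)$ and then unfold the ternary semantics of $B_i$, $K_i$, $S_i$---is exactly the paper's route, and your treatment of clause~(g), the $s_\none$-padding, and the $S_i$-step is fine. The gap is in your argument for the reverse inclusion of the lemma.

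You write ``By preserving consistency $\seq[t]\subseteq\vec{p}[t]$,'' but $\vec{p}$ is the prediction of $\observation_i(\seq)$, not of $\seq$; preserving consistency therefore only yields $\observation_i(\seq)[t]\subseteq\vec{p}[t]$. Since $s_t$ still carries the \emph{true} values of all other variables, the override $s_t\override{\{v\assign e\}}$ is in general not a subset of $\vec{p}[t]$, and your monotonicity step fails. Concretely: let $v$ be visible iff $v=5\land w=5$, let $w$ be never visible, take $s_t$ with $v=3$, $w=5$, and suppose the predictions from $\observation_i(\seq)$ give $v=5$, $w=3$. Reconstructive consistency is satisfied (in $\vec{p}[t]$ we have $w=3$, so $v$ remains unobserved), yet $s_t\override{\{v\assign 5\}}$ has $v=5$, $w=5$, making $v$ observed there; the Line~(2) filter rejects the prediction and $\f_i(\seq)[t]$ gets $v=\none$. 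So the identity $\f_i(\seq)[t]=\vec{p}[t]$ that you flag as ``the hard step'' is actually false.

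The lemma itself survives via a weaker route, which is what the paper does (implicitly reusing the machinery from the proof of Theorem~\ref{thm:fififi}): by construction $s''_t\subseteq\vec{p}[t]$ always, since the filter only \emph{discards} predicted assignments; the $\none$-padding of Line~(3) adds no observed variables, so $\observation_i(\f_i(\seq)[t])=\observation_i(s''_t)\subseteq\observation_i(\vec{p}[t])=\observation_i(\seq[t])$ by monotonicity and reconstructive consistency; the forward inclusion is your Theorem~\ref{thm:jpcontaino} step. With this fix the remainder of your proof and the paper's coincide. One caveat: the ``routine'' state-inclusion monotonicity of $T$ you defer to is the place that actually needs care---particularly at nested $B_j$, where two super-sequences of $\observation_i(\seq)$ feed into $f_j$ and may diverge on unobserved variables---though the paper is equally informal at that point.
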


\begin{proof}
    The premise of the above implication $T[\seq, B_i K_i \varphi] \! \assign \!1$ means $T[f_i(\seq), K_i \varphi] \! \assign \!1$, which is $T[f_i(\seq), \varphi] \! \assign \!1$ and $T[f_i(\seq),S_i \varphi] \! \assign \!1$, while, the conclusion of the theorem implication $T[\seq,  K_i \varphi]$ is effectively the same as $\min(T[\seq, \varphi], T[\seq, S_i \varphi])$ according to our semantics~(Definition~\ref{def:jp:ternary}).

    For the latter part ($T[\seq, S_i \varphi]$), when constructing $\f_i(\seq)$, the input sequence for all predictive retrieval calls is $\observation_i(\seq)$ according to Line (1) of Definition~\ref{def:pjpf}.
    Since all $pr_{type(\cdot)} \in P\!R$ satisfy the reconstructive consistent (in Definition~\ref{def:pr_function}), we have $\observation_i(\seq) = \observation_i(f_i(\seq))$.
    Since $T[f_i(\seq),S_i \varphi]=1$ means $T[\observation_i(f_i(\seq)), \varphi]$ is evaluated as either $0$ or $1$, we have $T[\observation_i(\seq), \varphi]$ is evaluated as either $0$ or $1$, which concludes $T[\seq,S_i \varphi]=1$.

    Then, according to $T[\observation_i(\seq), \varphi] \neq \unknown$ (item a in Definition~\ref{def:jp:ternary}), a partial-state sequence $\observation_i(\seq)$ is sufficient to evaluate $\varphi$. 
    It means the predicted values of unobserved variables are irrelevant to $\varphi$.
    In addition, since $\observation_i(\seq) = \observation_i(f_i(\seq))$, the partial-state sequences that determine the value of $\varphi$ from $f_i(\seq)$ and $\seq$ are the same.
    Thus, $T[f_i(\seq), \varphi]=1$ gives $T[\seq, \varphi]=1$.
    
    Therefore, Theorem~\ref{thm:bkisk} holds.    
\end{proof}

\section{Other Common Predictive Retrieval Functions}
\label{app:pr_functions}

This appendix provides formal definitions of the predictive retrieval functions utilized in the main text.

\addtocounter{mydefinition}{-1}
\addtocounter{subdefinition}{0}
\begin{subdefinition}[Linear Regression Predictive Retrieval Function]
\label{def:pr_linear_regression}
Given AT $\assign \{ j\! \mid \!v\! \in \!\seq[j] \land \seq[j](v) \neq \none \}$, the predictive retrieval function by linear regression,  $pr_{linear\_reg}(\vec{s}, t,v)$, is defined as:
\[
pr_{linear\_reg}(\vec{s}, t,v) =
\begin{cases}
\none & \text{if } |\text{AT}| = 0, \\
\seq[\max(\text{AT})](v) & \text{if }  |\text{AT}| < 2, \\
\vec{w}^\top \vec{x}_t + b & \text{if } |\text{AT}| \geq 2,
\end{cases}
\]
where $\vec{x}_t = [1, t]$, and the regression parameters $\vec{w}$ and $b$ are computed based on all observed timestamps $\{t_j \mid j \in \text{AT}\}$ and corresponding values $\{e_j = s_{t_j}(v)\}$.

The parameters $\vec{w}$ and $b$ are learned by minimizing the mean squared error (MSE) over all observed values~\cite{sklearn_api}:
\[
\text{MSE} = \frac{1}{|\text{AT}|} \sum_{j \in \text{AT}} \left( \vec{w}^\top \vec{x}_{t_j} + b - e_j \right)^2,
\]
where:
$e_j = s_{t_j}(v)$ are the observed values.

For prediction at timestamp $t$, $\vec{x}_t = [1, t]$ is used to compute:
\[\hat{e}_t = \vec{w}^\top \vec{x}_t + b. \defmathend
\]
\end{subdefinition}

\begin{subdefinition}[Second-order Polynomial Predictive Retrieval Function]
\label{def:pr_2nd_poly}
    Given $\text{AT} = \{ j\! \mid \!v\! \in \!\seq[j] \land \seq[j](v) \neq \none \}$,
            $\text{LT}\! =\! \{j \!\mid\! j\!\in\! \text{AT} \!\land j\! \leq\! t\}$, and $\text{RT}\! =\! \{j\! \mid \!j\in\! \text{AT}\! \land\! j\! >\! t\}$, the predictive retrieval function for the second-order polynomial, \( pr_{2nd\_poly}(\vec{s}, t,v) \), is defined as~\cite{Harris_2020}:
    \[
    pr_{2nd\_poly}(\vec{s}, t,v) =
    \begin{aligned}
    \begin{cases} 
    \none & \text{if } |\text{AT}| = 0 \\
    \seq[\max(\text{AT})](v) & \text{if } |\text{AT}|< 3 \\
    a t^2 + b t + c & \text{if } |\text{AT}| \geq 3
    \end{cases}
    \end{aligned}
    \]
    where:
    \begin{equation*}
        \begin{aligned}
            t_1, t_2, t_3 &= \mathrm{Top}_3(\text{AT}) \text{such that } t_1 < t_2 < t_3, \\
            e_1 &= \seq[t_1](v), \quad e_2 = \seq[t_2](v), \quad e_3 = \seq[t_3](v), \\
            \{ a, b, c \} &= \text{polyfit}([t_1, t_2, t_3], [e_1, e_2, e_3], 2)
        \end{aligned}\defmathend
    \end{equation*} 
\end{subdefinition}

\begin{subdefinition}[Power Function Predictive Retrieval Function]
\label{def:pr_power}
Given $\text{AT} = \{ j\! \mid \!v\! \in \!\seq[j] \land \seq[j](v) \neq \none \}$, the predictive retrieval function for the power function, \( pr_{power}( \vec{s}, t,v) \), is defined as:
\[
pr_{power}(\vec{s}, t, v) = 
\begin{cases} 
\none & \text{if } |\text{AT}| = 0, \\
a^t & \text{if } |\text{AT}| > 0 \land |a| =1 \\
\seq[\max(\text{AT})](v) & \text{otherwise}\\
\end{cases}
\]
where:
\begin{equation*}
    \begin{aligned}
        t_{\text{last}} &= \max(\text{AT}), \quad y_{\text{last}} = \seq[t_{\text{last}}](v), \\
        a &= 
        \begin{cases} 
        \{y_{\text{last}}^{1 / t_{\text{last}}}\} & \text{if } t_{\text{last}} \text{ is odd}, \\
        \{y_{\text{last}}^{1 / t_{\text{last}}}, -y_{\text{last}}^{1 / t_{\text{last}}}\} & \text{if } t_{\text{last}} \text{ is even and } y_{\text{last}} > 0, \\
        \emptyset & \text{otherwise}.
        \end{cases}
    \end{aligned}
\end{equation*}
\defnormalend
\end{subdefinition}

\begin{subdefinition}[Sinusoidal Predictive Retrieval Function]
\label{def:pr_sin}
Given $\text{AT} = \{ j\! \mid \!v\! \in \!\seq[j] \land \seq[j](v) \neq \none \}$, the predictive retrieval function for the sinusoidal model, \( pr_{sin}(\vec{s}, t,v) \), is defined as:
\[
pr_{sin}(\vec{s}, t,v) = 
\begin{cases} 
\none & \text{if } |\text{AT}| = 0, \\
a \cdot \sin\left(b \cdot \frac{\pi}{2} \cdot t + c \right) & \text{if } |\text{AT}| \geq 3, \\
\seq[\max(\text{AT})](v) & \text{otherwise}.
\end{cases}
\]
where:
\begin{equation*}
    \begin{aligned}
        x_{\text{obs}} &= [t_1, t_2, t_3, \ldots],\\
        y_{\text{obs}} &= [\seq[t_1](v), \seq[t_2](v), \seq[t_3](v), \ldots], \\
        \{a, b, c\} &= 
        \arg\min_{a, b, c} \sum_{i=1}^{|\text{AT}|} \left( y_{\text{obs}, i} - a \cdot \sin\left(b \cdot \frac{\pi}{2} \cdot x_{\text{obs}, i} + c \right) \right)^2\\
    \end{aligned}
\end{equation*}
Here, the coefficients \(a\), \(b\), and \(c\) are obtained using nonlinear least squares fitting on the observed points \((x_{\text{obs}}, y_{\text{obs}})\)~\cite{2020SciPy-NMeth}.\defnormalend 
\end{subdefinition}

\begin{subdefinition}[First-order Polynomial Predictive Retrieval Function (Dominant Coefficients)]
\label{def:pr_1st_poly_Dominant_coef}
    Given $\text{AT} = \{ j\! \mid \!v\! \in \!\seq[j] \land \seq[j](v) \neq \none \}$, the predictive retrieval function for the first-order polynomial in the plan validation section, \( pr_{dom\_1st\_poly}(\vec{s},t,v) \), can be defined as:
    \[
    pr_{dom\_1st\_poly}(\vec{s},t,v) = 
    \begin{cases} 
    \none & \text{if } |\text{AT}| = 0, \\
    a \cdot t + b & \text{if } |\text{AT}| \geq 2, \\
    \seq[\max(\text{AT})](v) & \text{otherwise}.
    \end{cases}
    \]
    where:
    \begin{equation*}
        \begin{aligned}
            x_{\text{obs}} &= [t_1, t_2, \ldots], \quad y_{\text{obs}} = [\seq[t_1](v), \seq[t_2](v), \ldots], \\
            \{a, b\} &= \arg\max_{a, b} \text{Frequency}(a, b),
        \end{aligned}
    \end{equation*}
    with Frequency(\(a, b\)) is the number of times the combination \(a\) and \(b\) appears across all point pairs.\defnormalend
\end{subdefinition}

\bibliography{pjp}


\end{document}